%% file: main.tex
\documentclass[letterpaper,10pt,conference]{ieeeconf}
\IEEEoverridecommandlockouts
\usepackage{cite}

\usepackage{amsmath}

\usepackage{amsthm, amssymb, verbatim}
\usepackage{array, booktabs, graphicx, multirow}
\usepackage[table]{xcolor}
\usepackage{capt-of}
\usepackage{stfloats}
\theoremstyle{definition}

\theoremstyle{plain}

\newtheorem{proposition}{Proposition}[section]

\theoremstyle{remark}

\newcommand{\methodname}{CATS}

\begin{document}
\title{\LARGE \bf Revisiting Temporal Regularization for Smooth Control 
	
	in Deep Reinforcement Learning}

\author{SungJae Ahn$^{1}$, Jeong Woon Lee$^{1}$, Kyoleen Kwak$^{1}$, and Hyoseok Hwang$^{1,\dagger}$%
\thanks{$^{1}$College of Software, Kyung Hee University.}%
\thanks{$^{\dagger}$Corresponding author: \texttt{hyoseok@khu.ac.kr}.}%
}

\maketitle
\thispagestyle{empty}
\pagestyle{empty}

\begin{abstract}
Deep Reinforcement Learning policies can produce nonsmooth action oscillations
that hinder deployment on physical robots. Existing architectural and
penalty-based approaches seek spatial smoothness by directly reducing sensitivity to
changes in state inputs, but their broad constraints can degrade task performance
as stronger smoothing is pursued. Temporal regularization instead constrains
action differences along observed transitions, but has been considered unable
to provide the spatial smoothness needed under observation noise. We revisit
this assumption by proving that the temporal penalty bounds the expected action differences
between current states sharing a next state, revealing a spatial effect that
empirically extends to spatial smoothness. Building on this finding, we propose
Conditioning for Action using only Temporal Smoothness (\methodname{}), which
combines a temporal penalty with linear ramp-up. We highlight
temporal regularization's ability to provide spatial smoothness while better
preserving task performance than explicit spatial regularization.
Through linear ramp-up, \methodname{} allows the policy
to learn rewarding behavior before progressively smoothing its actions,
improving return preservation and both temporal and spatial smoothness.
Experiments in both simulation and the real world show that
\methodname{} substantially reduces action oscillation without degrading task
performance, with little computational overhead.
\end{abstract}

\section{Introduction}
Deep Reinforcement Learning (RL) has emerged as a powerful
framework for continuous control~\cite{lillicrap2016continuous},
enabling robots to acquire complex motor skills through interaction
with their environment~\cite{andrychowicz2020learning}.
However, due to optimization instability in deep RL~\cite{lee2026stabilizing} and observation noise~\cite{mysore2021regularizing,song2025lipsnetpp}, policies for robotic control can produce nonsmooth action oscillations that hinder their deployment on physical systems~\cite{christmann2024benchmarking,azimi2026benchmarking}.
Such oscillations not only compromise control reliability~\cite{kumar2021rma}, but also accelerate mechanical wear, jeopardize hardware safety, and increase energy consumption~\cite{mysore2021regularizing,chen2025learning}.

Reducing action oscillations in real-world deployment
calls for two policy properties, \textit{temporal smoothness} and
\textit{spatial smoothness}~\cite{mysore2021regularizing}.
Temporal smoothness refers to low policy sensitivity
to state inputs received over time and directly relates to action
oscillation in simulation. Spatial smoothness refers to low policy sensitivity to small changes
in state inputs, thus reducing sensitivity to real-world noise.
To promote either or both, prior work has
employed corresponding regularization methods.

Although action oscillation is fundamentally a change
in policy outputs over time, many existing methods view action smoothness as a property
of the policy function without considering the temporal ordering of its inputs.
Accordingly, these spatial regularization methods seek spatial smoothness
directly by encouraging actions to vary less across any local state changes.
While these approaches can reduce action oscillation
and noise sensitivity, they act across all state-space directions and may
therefore restrict action changes needed for high return.

Temporal regularization, in contrast, exploits the temporal ordering
of policy inputs to reduce action variation over time. Because it constrains the policy only along
transition directions that are actually observed, it can pursue stronger
smoothness with less degradation in task performance.
This same focus, however, appears to leave other directions
unconstrained, so even a policy that is smooth along its trajectories may remain
sensitive to observation noise in those directions. Temporal
regularization has thus been regarded as unrelated to spatial smoothness and
has commonly been paired with a spatial
penalty~\cite{mysore2021regularizing,cao2023image,kwak2026enhancing,lee2026stabilizing},
reintroducing broad state-space constraints and leaving the relative
weighting of the two penalties unclear.

The complexity introduced by existing smoothness
methods is another, often underappreciated challenge. While these methods aim
to facilitate real-world deployment by
reducing action oscillation, this complexity can hinder the very deployment
they target. Robotic RL
already involves many interacting hyperparameters and demanding training
pipelines, while evaluations are expensive and performance is sensitive to
configuration choices~\cite{mahmood2018benchmarking,eimer2023hyperparameters}.
Expanding the hyperparameter space or increasing training time
therefore makes smoothness methods harder to apply in practice.

\begin{figure}[t]
    \centering
    \includegraphics[width=\columnwidth]{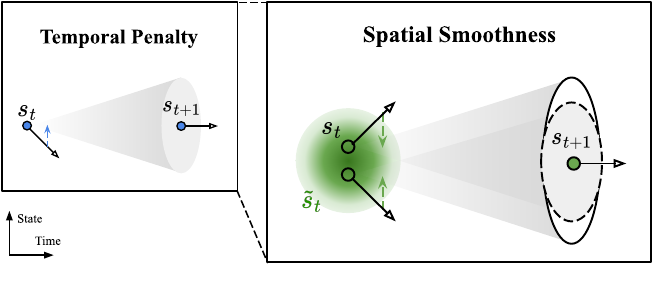}
    \vspace{-10pt}
    \caption{From temporal regularization to spatial
    smoothness. Black arrows denote actions; colored
    arrows indicate alignment. The temporal penalty aligns actions along
    transitions (left). When nearby states share possible next states, this
    alignment can indirectly bring their actions closer (right).}
    \label{fig:transition_conditioned_smoothing}
\vspace{-10pt}
\end{figure}

In this work, we aim to develop a practical approach
that achieves both temporal and spatial smoothness while preserving task
performance. To this end, we
revisit the assumption that temporal regularization cannot
provide the spatial smoothness needed under observation noise.
During training, the temporal penalty acts on next
states drawn from a distribution rather than a single fixed next state. When two current states share a next state,
their actions are encouraged to approach the same next-state action,
indirectly becoming similar.
We establish this spatial effect by proving that the temporal penalty bounds
the expected action difference between these states. We further hypothesize that this effect
extends to spatial smoothness when nearby states share
more possible next states (Fig.~\ref{fig:transition_conditioned_smoothing}). Our experiments confirm this effect and the
resulting spatial smoothness, showing that a temporal penalty alone is
sufficient for both forms of smoothness in the evaluated tasks.

Building on this finding, we propose Conditioning for
Action using \textit{only }Temporal Smoothness (\textbf{\methodname{}}), a lightweight
regularizer that applies a temporal penalty with a
linearly increasing coefficient (linear ramp-up). We highlight
temporal regularization's ability to provide spatial smoothness while preserving
performance, unlike explicit spatial regularization methods.
Linear ramp-up also strengthens these benefits by allowing the policy
to learn rewarding behavior before smoothing its actions, rather than seeking
rewarding behavior under smoothness constraints.
\methodname{} uses no spatial penalty and adds only one
hyperparameter, with little computational overhead and minimal implementation
changes.

Extensive experiments show that \methodname{} substantially reduces action
oscillation without decreasing return, particularly on higher-dimensional tasks,
and continues to do so under observation noise. We
also directly examine the spatial smoothness induced by \methodname{}, compare it with
that produced by explicit spatial regularization, and introduce a simple
extension that broadens the induced spatial effect. Finally, sim-to-real
experiments show that \methodname{} effectively reduces action oscillation
on a physical robot despite its simple design.

\section{Related Work}

\subsection{Spatial Regularization}

Architectural approaches formulate smoothness solely as a property of
the policy's input-to-output mapping. These methods
often promote spatial smoothness by reducing the policy's Lipschitz constant~\cite{takase2022stability}
without assumptions about the input states.
LipsNet~\cite{song2023lipsnet} enforces this constraint in its forward pass,
requiring Jacobian computation at inference. LCP~\cite{chen2025learning}
and LipsNet++~\cite{song2025lipsnetpp} instead constrain
the policy's Lipschitz constant through computationally expensive gradient-based
penalties during training.

Penalty-based spatial regularization methods seek
spatial smoothness by explicitly penalizing action differences between nearby
states. CAPS~\cite{mysore2021regularizing} draws
Gaussian-perturbed states around the current state.
L2C2~\cite{kobayashi2022l2c2} narrows the neighborhood using the
displacement between consecutive states. ASAP~\cite{kwak2026enhancing} aligns the current action with an expected action prediction using
transitions from the previous state. SR$^2$L~\cite{shen2020deep} instead searches for perturbations
through iterative gradient ascent during each policy
update.

\subsection{Temporal Regularization}

Temporal regularization promotes temporal
smoothness by penalizing policy-output differences along sampled next-state
directions, as in CAPS~\cite{mysore2021regularizing}.
Grad-CAPS~\cite{lee2024gradient} instead penalizes changes in action differences,
allowing actions to vary while changing smoothly.
However, it requires two consecutive transitions, adding computation and
requiring buffer modifications to retain the additional state.
Because temporal regularization has been regarded as unrelated to spatial
smoothness, prior methods have combined it with a separate spatial
penalty~\cite{mysore2021regularizing,cao2023image,kwak2026enhancing,lee2026stabilizing}.
These additional penalties introduce hyperparameters and computational overhead, with no clear
criterion for balancing the two penalties.

Reward penalties are a widely used form of temporal
regularization that penalizes action changes over time, but prior work has
reported that they can instead increase action
oscillation under sparse rewards~\cite{chen2021addressing,song2023lipsnet,song2025lipsnetpp}.
Other approaches exploit temporal information
through policy architectures~\cite{wang2024smooth,ICLR2025_6e0274fd}
or action-selection mechanisms~\cite{chen2021addressing,yu2021taac}.
Compared with loss-based regularization, these
approaches can increase implementation complexity, hyperparameters, and computational cost.

\section{Background}

\subsection{Action Smoothness in Reinforcement Learning}
\label{subsec:rl_smoothness}

Throughout this paper, action smoothness\footnote{We also use ``smoothness'' as shorthand for action smoothness, distinct from the policy properties of temporal and spatial smoothness.} refers to the smoothness of
generated action sequences, regardless of observation noise in the
environment.
For observation $o$ and action $a$, let $\pi(a\mid o)$ denote the policy
interacting with the environment and $\mu(o)\in\mathbb R^{d_a}$ its
deterministic output or stochastic mean, where $d_a$ is the action dimension.
Following prior work, we define regularization using $\mu$ and evaluate the
resulting action sequence $a_t=\mu(o_t)$ after training. This excludes
action-sampling noise from the smoothness evaluation and reflects deterministic deployment behavior.
We use the smoothness score of Mysore et
al.~\cite{mysore2021regularizing}, which measures high-frequency content in the
action signal; lower scores indicate smoother actions.

\subsection{Temporal and Spatial Smoothness}
\label{subsec:temporal_spatial_smoothness}

We refer to the policy properties required for action smoothness as temporal
and spatial smoothness. Temporal smoothness concerns policy sensitivity to
inputs encountered over time, while spatial smoothness concerns its sensitivity
to small state-input changes in any direction~\cite{mysore2021regularizing,kwak2026enhancing}.

To illustrate the roles of these two properties, we express action variation
in continuous time.
Let $J_{\mu}(o)=\partial\mu(o)/\partial o\in\mathbb R^{d_a\times d_s}$
denote the policy Jacobian matrix, which describes how
the policy output changes with its input, where $d_s$ is the dimension of the state
and observation vectors.
When the policy directly observes the underlying state, $o_t=s_t$, the
action derivative is the instantaneous rate of change
of the policy output over time,
\begin{equation}
    \left\|\frac{d a_t}{dt}\right\|
    =
    \left\|J_{\mu}(s_t)\frac{d s_t}{dt}\right\|.
    \label{eq:noise_free_temporal_direction}
\end{equation}
Therefore, in the noise-free case, temporal smoothness
is the policy property directly related to action oscillation.

To make the role of spatial smoothness explicit, as in
Song et al.~\cite{song2025lipsnetpp}, we assume that the actor observes
$o_t=s_t+\epsilon_t$ during deployment, where $\epsilon_t$ denotes the observation
noise at time step $t$. The action derivative then decomposes as
\begin{equation}
    \left\|\frac{d a_t}{dt}\right\|
    =
    \Biggl\|
    J_{\mu}(o_t)
    \underbrace{\frac{d s_t}{dt}}_{\text{state direction}}
    +
    J_{\mu}(o_t)
    \underbrace{\frac{d\epsilon_t}{dt}}_{\text{noise direction}}
    \Biggr\|.
    \label{eq:state_noise_exact_decomposition}
\end{equation}
The noise direction in the second component is generally unknown and
uncontrollable. However, its contribution can be bounded by the product of
the policy Jacobian norm and the noise rate, yielding
\begin{equation}
    \left\|\frac{d a_t}{dt}\right\|
    \le
    \underbrace{
    \left\|J_{\mu}(o_t)\frac{d s_t}{dt}\right\|
    }_{\text{temporal smoothness}}
    +
    \underbrace{
    \left\|J_{\mu}(o_t)\right\|
    }_{\text{spatial smoothness}}
    \left\|\frac{d\epsilon_t}{dt}\right\|.
    \label{eq:state_noise_directional_bound}
\end{equation}
Therefore, under observation noise, spatial smoothness is also
needed to limit action variation along noise directions.
Methods combining temporal and spatial regularization seek the
corresponding policy properties by reducing the first and second terms in
\eqref{eq:state_noise_directional_bound}, respectively.

Alternatively, spatial smoothness alone can provide
temporal smoothness, since its low sensitivity across state-space directions
also applies along the state trajectory. Applying the relaxation to the first
term in \eqref{eq:state_noise_directional_bound} gives
    the looser bound~\cite{song2025lipsnetpp}
\begin{equation}
    \left\|\frac{d a_t}{dt}\right\|
    \le
    \underbrace{
    \left\|J_{\mu}(o_t)\right\|
    }_{\text{spatial smoothness}}
    \left(\left\|\frac{d s_t}{dt}\right\|+
    \left\|\frac{d\epsilon_t}{dt}\right\|\right).
    \label{eq:full_jacobian_temporal_bound}
\end{equation}
Most architectural and spatial-only regularization methods seek to reduce the
bound in \eqref{eq:full_jacobian_temporal_bound}.
Specifically, LipsNet~\cite{song2023lipsnet} controls $\|J_{\mu}(o_t)\|$ at inference
time, whereas LipsNet++~\cite{song2025lipsnetpp} and
LCP~\cite{chen2025learning} constrain $\|J_{\mu}(s_t)\|$ during training.

\section{Revisiting Temporal Regularization}

\subsection{Scope of Spatial and Temporal Regularization}
\label{subsec:scope_regularization}

For a sampled transition, the CAPS temporal penalty~\cite{mysore2021regularizing}
measures the action difference along the direction $s_{t+1}-s_t$, with the local
first-order approximation
\begin{equation}
    \left\|\mu(s_{t+1})-\mu(s_t)\right\|
    \approx
    \left\|J_{\mu}(s_t)(s_{t+1}-s_t)\right\|.
    \label{eq:clean_temporal_direction}
\end{equation}

The key distinction between spatial and temporal regularization lies in the
scope of state-space directions they constrain. Let
$\hat v_t=(s_{t+1}-s_t)/\|s_{t+1}-s_t\|_2$ denote the normalized transition
direction. The quantities constrained by the two regularizers are
\begin{equation*}
\begin{aligned}
    \text{Spatial}\quad
    &\left\|J_{\mu}(s_t)v\right\|_2,
    \qquad v\in\mathbb S^{d_s-1},\\
    \text{Temporal}\quad
    &\left\|J_{\mu}(s_t)\hat v_t\right\|_2.
\end{aligned}
\end{equation*}
Here, $\mathbb S^{d_s-1}$ is the unit sphere in the state space.
Thus, temporal regularization constrains the policy Jacobian only along the
sampled next-state direction, whereas spatial regularization constrains it along
all local state-space directions.

In control tasks, retaining the current action at the next time step can still yield high return~\cite{chen2021addressing,yu2021taac}. An arbitrary
state displaced from $s_t$ by the same magnitude, however, has no such
transition relation, and enforcing the same action there may incur greater
return loss. Therefore, spatial regularization that does not
account for the temporal relations, which govern oscillation, may prevent the policy
from selecting rewarding actions. This motivates revisiting
temporal regularization as a way to pursue stronger action smoothness while better
preserving task performance, as examined in
Section~\ref{subsec:temporal_spatial_regularization}.

\subsection{Spatial Effect of the Temporal Penalty}
\label{subsec:dynamics_aware_temporal_penalty}

Temporal regularization has been regarded as unrelated
to spatial smoothness because it constrains only the next-state direction of
each transition. During training, however, the next state is sampled from a
distribution rather than fixed.
To examine the implication of this sampling, we consider the one-step
temporal penalty used in CAPS~\cite{mysore2021regularizing}.
Let $d_\pi$ denote the current-state distribution under $\pi$, and
let $P_\pi(\cdot\mid s)$ denote the next-state distribution jointly induced by
the policy and environment dynamics.
The expected temporal penalty is
\begin{equation}
    L_{\mathrm{T}}
    :=
    \mathbb E_{s\sim d_\pi}
    \mathbb E_{s'\sim P_\pi(\cdot\mid s)}
    \left[\|\mu(s)-\mu(s')\|_2^2\right].
    \label{eq:squared_temporal_penalty_mean_actor}
\end{equation}
Let $d_\pi^+$ denote the resulting next-state distribution, and let
$\rho_\pi(\cdot\mid s')$ denote the conditional
distribution over current states, given that the next state is $s'$.
We measure the induced spatial effect between two conditionally
independent current states sharing $s'$ as
\begin{equation}
    L_{\mathrm{S}}
    :=
    \mathbb E_{s'\sim d_\pi^+}
    \mathbb E_{u,v\overset{\mathrm{i.i.d.}}{\sim}\rho_\pi(\cdot\mid s')}
    \left[\|\mu(u)-\mu(v)\|_2^2\right].
    \label{eq:transition_conditioned_spatial_penalty}
\end{equation}
$L_{\mathrm S}$ measures the average squared action difference
between current states that share possible next states.

Under this conditioning, the temporal
penalty encourages both $\mu(u)$ and $\mu(v)$ to approach the common
next-state action $\mu(s')$, indirectly encouraging similar current-state
actions.

\begin{table*}[t]
\centering
\vspace*{4pt}
\caption{Clean-observation results.}
\label{tab:main_results}
\scriptsize
\setlength{\tabcolsep}{0.8pt}
\renewcommand{\arraystretch}{1.4}
\setlength{\aboverulesep}{0.25ex}
\setlength{\belowrulesep}{0.25ex}
\resizebox{\textwidth}{!}{
\begin{tabular}{l l >{\columncolor{gray!10}}l l >{\columncolor{gray!10}}l l >{\columncolor{gray!10}}l l >{\columncolor{gray!10}}l l >{\columncolor{gray!10}}l @{\hspace{5pt}}l >{\columncolor{gray!10}}l l >{\columncolor{gray!10}}l l >{\columncolor{gray!10}}l l >{\columncolor{gray!10}}l l >{\columncolor{gray!10}}l}
\toprule
\multirow{3}{*}{Method} & \multicolumn{10}{c}{PPO} & \multicolumn{10}{c}{SAC} \\
\cmidrule(lr){2-11} \cmidrule(lr){12-21}
& \multicolumn{2}{c}{Ant} & \multicolumn{2}{c}{Hopper} & \multicolumn{2}{c}{LunarLander} & \multicolumn{2}{c}{Pendulum} & \multicolumn{2}{c}{Walker}
& \multicolumn{2}{c}{Ant} & \multicolumn{2}{c}{Hopper} & \multicolumn{2}{c}{LunarLander} & \multicolumn{2}{c}{Pendulum} & \multicolumn{2}{c}{Walker} \\
\cmidrule(lr){2-3} \cmidrule(lr){4-5} \cmidrule(lr){6-7} \cmidrule(lr){8-9} \cmidrule(lr){10-11}
\cmidrule(lr){12-13} \cmidrule(lr){14-15} \cmidrule(lr){16-17} \cmidrule(lr){18-19} \cmidrule(lr){20-21}
& \multicolumn{1}{c}{$R \uparrow$} & \multicolumn{1}{c}{\cellcolor{gray!10}$sm \downarrow$} & \multicolumn{1}{c}{$R \uparrow$} & \multicolumn{1}{c}{\cellcolor{gray!10}$sm \downarrow$} & \multicolumn{1}{c}{$R \uparrow$} & \multicolumn{1}{c}{\cellcolor{gray!10}$sm \downarrow$} & \multicolumn{1}{c}{$R \uparrow$} & \multicolumn{1}{c}{\cellcolor{gray!10}$sm \downarrow$} & \multicolumn{1}{c}{$R \uparrow$} & \multicolumn{1}{c}{\cellcolor{gray!10}$sm \downarrow$}
& \multicolumn{1}{c}{$R \uparrow$} & \multicolumn{1}{c}{\cellcolor{gray!10}$sm \downarrow$} & \multicolumn{1}{c}{$R \uparrow$} & \multicolumn{1}{c}{\cellcolor{gray!10}$sm \downarrow$} & \multicolumn{1}{c}{$R \uparrow$} & \multicolumn{1}{c}{\cellcolor{gray!10}$sm \downarrow$} & \multicolumn{1}{c}{$R \uparrow$} & \multicolumn{1}{c}{\cellcolor{gray!10}$sm \downarrow$} & \multicolumn{1}{c}{$R \uparrow$} & \multicolumn{1}{c}{\cellcolor{gray!10}$sm \downarrow$} \\
\midrule
Base & 1411\,{\tiny (506)} & \cellcolor{gray!10}1.464\,{\tiny (0.193)} & 2658\,{\tiny (685)} & \cellcolor{gray!10}1.162\,{\tiny (0.265)} & \underline{221.2}\,{\tiny (28.6)} & \cellcolor{gray!10}0.209\,{\tiny (0.012)} & -217.7\,{\tiny (66.1)} & \cellcolor{gray!10}0.469\,{\tiny (0.031)} & 3510\,{\tiny (582)} & \cellcolor{gray!10}1.682\,{\tiny (0.237)} & 3656\,{\tiny (675)} & \cellcolor{gray!10}2.286\,{\tiny (0.213)} & 2656\,{\tiny (702)} & \cellcolor{gray!10}1.507\,{\tiny (0.273)} & \underline{275.9}\,{\tiny (6.1)} & \cellcolor{gray!10}0.424\,{\tiny (0.052)} & -149.3\,{\tiny (1.4)} & \cellcolor{gray!10}0.984\,{\tiny (0.103)} & 4189\,{\tiny (676)} & \cellcolor{gray!10}1.268\,{\tiny (0.126)} \\
\midrule
CAPS & 1873\,{\tiny (469)} & \cellcolor{gray!10}1.222\,{\tiny (0.170)} & \cellcolor{red!18}2440\,{\tiny (608)} & \cellcolor{gray!10}0.471\,{\tiny (0.049)} & \cellcolor{red!18}205.6\,{\tiny (22.2)} & \cellcolor{red!18}0.214\,{\tiny (0.014)} & \textbf{-175.2}\,{\tiny (5.4)} & \cellcolor{gray!10}\underline{0.421}\,{\tiny (0.015)} & 3626\,{\tiny (666)} & \cellcolor{gray!10}0.505\,{\tiny (0.056)} & \cellcolor{red!18}3482\,{\tiny (543)} & \cellcolor{gray!10}2.264\,{\tiny (0.116)} & 2856\,{\tiny (393)} & \cellcolor{gray!10}1.235\,{\tiny (0.087)} & \cellcolor{red!18}269.4\,{\tiny (11.5)} & \cellcolor{gray!10}0.355\,{\tiny (0.031)} & \cellcolor{red!18}-177.0\,{\tiny (54.0)} & \cellcolor{gray!10}\underline{0.376}\,{\tiny (0.036)} & \underline{4651}\,{\tiny (357)} & \cellcolor{gray!10}1.178\,{\tiny (0.101)} \\
\midrule
L2C2 & \cellcolor{red!18}1262\,{\tiny (358)} & \cellcolor{red!18}1.515\,{\tiny (0.157)} & 2704\,{\tiny (545)} & \cellcolor{red!18}1.502\,{\tiny (0.435)} & \cellcolor{red!18}208.2\,{\tiny (26.7)} & \cellcolor{red!18}0.211\,{\tiny (0.020)} & -184.8\,{\tiny (17.1)} & \cellcolor{gray!10}0.452\,{\tiny (0.019)} & \cellcolor{red!18}3342\,{\tiny (547)} & \cellcolor{red!18}1.744\,{\tiny (0.157)} & \cellcolor{red!18}2927\,{\tiny (379)} & \cellcolor{gray!10}2.276\,{\tiny (0.121)} & 2965\,{\tiny (614)} & \cellcolor{gray!10}1.249\,{\tiny (0.105)} & \cellcolor{red!18}272.8\,{\tiny (8.9)} & \cellcolor{red!18}0.441\,{\tiny (0.064)} & \cellcolor{red!18}-149.8\,{\tiny (1.8)} & \cellcolor{red!18}1.189\,{\tiny (0.122)} & 4413\,{\tiny (405)} & \cellcolor{gray!10}1.081\,{\tiny (0.191)} \\
\midrule
GRAD & 2354\,{\tiny (348)} & \cellcolor{gray!10}1.180\,{\tiny (0.065)} & \textbf{3309}\,{\tiny (220)} & \cellcolor{gray!10}0.578\,{\tiny (0.050)} & \cellcolor{red!18}218.1\,{\tiny (18.0)} & \cellcolor{red!18}0.215\,{\tiny (0.021)} & -190.7\,{\tiny (16.2)} & \cellcolor{gray!10}0.451\,{\tiny (0.019)} & \cellcolor{red!18}3384\,{\tiny (928)} & \cellcolor{gray!10}0.626\,{\tiny (0.057)} & \cellcolor{red!18}3609\,{\tiny (918)} & \cellcolor{gray!10}1.940\,{\tiny (0.152)} & 2978\,{\tiny (616)} & \cellcolor{gray!10}0.898\,{\tiny (0.097)} & \cellcolor{red!18}250.2\,{\tiny (62.2)} & \cellcolor{gray!10}0.371\,{\tiny (0.087)} & -149.3\,{\tiny (1.4)} & \cellcolor{gray!10}0.504\,{\tiny (0.019)} & 4381\,{\tiny (278)} & \cellcolor{gray!10}0.905\,{\tiny (0.146)} \\
\midrule
ASAP & \underline{2807}\,{\tiny (411)} & \cellcolor{gray!10}1.111\,{\tiny (0.071)} & 2780\,{\tiny (557)} & \cellcolor{gray!10}\underline{0.441}\,{\tiny (0.035)} & \cellcolor{red!18}190.9\,{\tiny (28.3)} & \cellcolor{red!18}0.213\,{\tiny (0.014)} & \cellcolor{red!18}-223.2\,{\tiny (121.8)} & \cellcolor{gray!10}0.449\,{\tiny (0.040)} & \underline{3754}\,{\tiny (530)} & \cellcolor{gray!10}\underline{0.490}\,{\tiny (0.034)} & \cellcolor{red!18}3588\,{\tiny (790)} & \cellcolor{gray!10}\underline{1.383}\,{\tiny (0.168)} & \underline{3014}\,{\tiny (516)} & \cellcolor{gray!10}\underline{0.655}\,{\tiny (0.084)} & \cellcolor{red!18}275.7\,{\tiny (8.4)} & \cellcolor{gray!10}\underline{0.248}\,{\tiny (0.034)} & \textbf{-148.1}\,{\tiny (0.6)} & \cellcolor{gray!10}0.485\,{\tiny (0.041)} & 4466\,{\tiny (400)} & \cellcolor{gray!10}\underline{0.795}\,{\tiny (0.056)} \\
\midrule
LipsNet++ & 2297\,{\tiny (295)} & \cellcolor{gray!10}\underline{0.266}\,{\tiny (0.020)} & \cellcolor{red!18}1878\,{\tiny (642)} & \cellcolor{gray!10}0.543\,{\tiny (0.082)} & \cellcolor{red!18}134.5\,{\tiny (44.0)} & \cellcolor{gray!10}\textbf{0.144}\,{\tiny (0.021)} & \cellcolor{red!18}-671.4\,{\tiny (211.1)} & \cellcolor{red!18}0.553\,{\tiny (0.123)} & \cellcolor{red!18}2344\,{\tiny (556)} & \cellcolor{gray!10}0.993\,{\tiny (0.204)} & \textbf{4283}\,{\tiny (486)} & \cellcolor{gray!10}2.264\,{\tiny (0.190)} & \cellcolor{red!18}2570\,{\tiny (800)} & \cellcolor{gray!10}0.886\,{\tiny (0.159)} & \cellcolor{red!18}259.9\,{\tiny (46.6)} & \cellcolor{gray!10}0.358\,{\tiny (0.266)} & \underline{-148.4}\,{\tiny (0.7)} & \cellcolor{gray!10}0.694\,{\tiny (0.045)} & 4419\,{\tiny (197)} & \cellcolor{gray!10}1.093\,{\tiny (0.118)} \\
\midrule
\methodname{} & \textbf{2973}\,{\tiny (140)} & \cellcolor{gray!10}\textbf{0.127}\,{\tiny (0.006)} & \underline{3002}\,{\tiny (488)} & \cellcolor{gray!10}\textbf{0.222}\,{\tiny (0.033)} & \textbf{227.2}\,{\tiny (14.2)} & \cellcolor{gray!10}\underline{0.194}\,{\tiny (0.013)} & \underline{-183.9}\,{\tiny (12.7)} & \cellcolor{gray!10}\textbf{0.343}\,{\tiny (0.014)} & \textbf{4071}\,{\tiny (614)} & \cellcolor{gray!10}\textbf{0.407}\,{\tiny (0.032)} & \underline{4251}\,{\tiny (471)} & \cellcolor{gray!10}\textbf{0.557}\,{\tiny (0.049)} & \textbf{3046}\,{\tiny (514)} & \cellcolor{gray!10}\textbf{0.259}\,{\tiny (0.029)} & \textbf{278.2}\,{\tiny (2.4)} & \cellcolor{gray!10}\textbf{0.196}\,{\tiny (0.009)} & -149.0\,{\tiny (0.9)} & \cellcolor{gray!10}\textbf{0.291}\,{\tiny (0.024)} & \textbf{4714}\,{\tiny (314)} & \cellcolor{gray!10}\textbf{0.419}\,{\tiny (0.048)} \\
\bottomrule
\end{tabular}}
\vspace{-10pt}
\end{table*}

\begin{proposition}[The Temporal Penalty Bounds the Spatial Effect]
\label{prop:squared_euclidean_bound}
Assume the relevant second moments are finite. Then
\begin{equation}
    L_{\mathrm{S}}
    \le
    2L_{\mathrm{T}}.
    \label{eq:squared_euclidean_bound}
\end{equation}
\end{proposition}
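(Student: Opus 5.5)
The plan is to insert the shared next-state action $\mu(s')$ between $\mu(u)$ and $\mu(v)$, and then to recognize each of the two resulting pieces as a copy of $L_{\mathrm T}$ written in the reversed factorization of the joint law.

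\emph{Step 1: reverse the factorization.} Let $\Gamma(ds,ds') = d_\pi(ds)\,P_\pi(ds'\mid s)$ be the joint law of the pair $(s,s')$. By construction, $d_\pi^+$ is its second marginal and $\rho_\pi(\cdot\mid s')$ is the corresponding regular conditional distribution. Disintegrating in the other order gives
\begin{equation*}
    L_{\mathrm T}=\mathbb E_{s'\sim d_\pi^+}\mathbb E_{u\sim\rho_\pi(\cdot\mid s')}\left[\|\mu(u)-\mu(s')\|_2^2\right].
\end{equation*}
This holds by Tonelli's theorem, since the integrand is nonnegative. The finite second-moment assumption guarantees that every quantity involved is finite, so the arithmetic below is legitimate. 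Because $v$ has the same conditional law as $u$, the same expression also equals $L_{\mathrm T}$ with $v$ in place of $u$.

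\emph{Step 2: pointwise inequality.} For fixed $s'$ and any $u,v$, the triangle inequality combined with $(a+b)^2\le 2a^2+2b^2$ yields
\begin{equation*}
    \|\mu(u)-\mu(v)\|_2^2\le 2\|\mu(u)-\mu(s')\|_2^2+2\|\mu(v)-\mu(s')\|_2^2 .
\end{equation*}

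\emph{Step 3: take expectations.} Average the inequality of Step 2 over $u,v\overset{\mathrm{i.i.d.}}{\sim}\rho_\pi(\cdot\mid s')$ and then over $s'\sim d_\pi^+$. Each term on the right depends on only one of $u,v$, so by Step 1 each contributes $2L_{\mathrm T}$. This gives the crude bound $L_{\mathrm S}\le 4L_{\mathrm T}$, which is a factor of two too weak.

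\emph{Step 4: exact decomposition.} To recover the constant $2$, expand the square exactly instead of bounding it. Write $\bar m(s')=\mathbb E_{u\sim\rho_\pi(\cdot\mid s')}[\mu(u)]$. Conditional independence of $u$ and $v$ gives
\begin{equation*}
    \mathbb E_{u,v}\|\mu(u)-\mu(v)\|_2^2 = 2\,\mathrm{Var}_{\rho_\pi(\cdot\mid s')}(\mu),
\end{equation*}
since the cross term becomes $\bar m(s')$ paired with itself. The variance is the minimal mean squared deviation over all constant centers, so
\begin{equation*}
    \mathrm{Var}_{\rho_\pi(\cdot\mid s')}(\mu)\le \mathbb E_{u}\|\mu(u)-\mu(s')\|_2^2 .
\end{equation*}
Equivalently, this is the bias--variance identity, in which the bias term $\|\bar m(s')-\mu(s')\|_2^2$ is dropped. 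Integrating over $s'\sim d_\pi^+$ and applying Step 1 gives $L_{\mathrm S}\le 2L_{\mathrm T}$.

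The only delicate point is measure-theoretic: justifying the disintegration in Step 1, namely that $\rho_\pi$ exists and the order of integration can be exchanged. This follows from standard results on regular conditional probabilities, provided the state space is Polish (e.g.\ $\mathbb R^{d_s}$). Finiteness of the second moments of $\mu$ under $d_\pi$ and $d_\pi^+$ ensures that $\mathrm{Var}$ and $\bar m$ are well defined.
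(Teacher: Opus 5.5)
Your proof is correct and, apart from the unneeded crude-bound detour in Steps 2--3, it is the paper's argument: condition $L_{\mathrm T}$ on the shared next state $s'$, use the i.i.d.\ variance identity $\tfrac12\mathbb E\|\mu(u)-\mu(v)\|_2^2=\mathbb E\|\mu(u)-\bar\mu(s')\|_2^2$, and compare with $\mathbb E\|\mu(u)-\mu(s')\|_2^2$ through the bias--variance decomposition. The paper keeps the bias term and obtains the exact gap $L_{\mathrm T}-\tfrac12 L_{\mathrm S}=\mathbb E_{s'\sim d_\pi^+}\|\bar\mu(s')-\mu(s')\|_2^2\ge 0$; dropping that term, as you do, gives the same inequality.
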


\begin{proof}
Conditioning the temporal penalty on $s'$ gives
\begin{align*}
L_{\mathrm{T}}
&=\mathbb E_{s'\sim d_\pi^+}
  \mathbb E_{u\sim\rho_\pi(\cdot\mid s')}
  \left[\|\mu(u)-\mu(s')\|_2^2\right].
\end{align*}
For fixed $s'$, let $\rho=\rho_\pi(\cdot\mid s')$, let
$u,v\overset{\mathrm{i.i.d.}}{\sim}\rho$, and define
$\bar\mu(s')=\mathbb E_{u\sim\rho}[\mu(u)]$, where
$\bar\mu(s')$ is the mean action vector at current states conditioned on the
next state $s'$. The variance identity gives
\begin{equation*}
    \tfrac12
    \mathbb E_{u,v\sim\rho}\!
    \left[\|\mu(u)-\mu(v)\|_2^2\right]
    \!=\!
    \mathbb E_{u\sim\rho}\!
    \left[\|\mu(u)-\bar\mu(s')\|_2^2\right].
\end{equation*}
Applying the bias--variance decomposition gives
{
\begin{equation*}
\begin{aligned}
    &\mathbb E_{u\sim\rho}
    \left[\|\mu(u)-\mu(s')\|_2^2\right]\\
    &\quad=\tfrac12\mathbb E_{u,v\sim\rho}
    \left[\|\mu(u)-\mu(v)\|_2^2\right]
    +\|\bar\mu(s')-\mu(s')\|_2^2.
\end{aligned}
\end{equation*}}
Averaging over $s'\sim d_\pi^+$ then gives
\begin{multline*}
    L_{\mathrm{T}}-\frac{1}{2}L_{\mathrm{S}}
    \\
    =\mathbb E_{s'\sim d_\pi^+}
    \left[\|\bar\mu(s')-\mu(s')\|_2^2\right]
    \ge 0.
\end{multline*}
\end{proof}

Thus, reducing the temporal penalty tightens the upper bound on
$L_{\mathrm S}$, allowing temporal regularization to align actions between
states whose next-state distributions overlap. However, this alignment does not
by itself imply spatial smoothness.
Motivated by the smooth-transition structure of many continuous-state
environments~\cite{shen2020deep}, we hypothesize that nearby states often have
overlapping next-state distributions. Under this
assumption, action differences between nearby states contribute to
$L_{\mathrm S}$ across a range of shared next states. The temporal penalty can
therefore indirectly reduce these differences, promoting spatial smoothness
and lowering policy sensitivity to observation noise, as tested in
Section~\ref{subsec:observation_sensitivity}.

\subsection{Temporal Penalty with Linear Ramp-Up}
\label{subsec:linear_penalty_ramp_up}

The preceding analyses motivate using the temporal
penalty in \eqref{eq:squared_temporal_penalty_mean_actor} without explicit spatial
regularization. We next consider how to apply this penalty over the course of
training. Despite its narrower scope, strong smoothing requires a large
coefficient, which can still prevent the policy from selecting rewarding
actions when applied from the start of training. Because our goal is to obtain a
smooth final policy while preserving return rather than smooth behavior throughout
training, the coefficient can vary over the course of training.

These considerations motivate training-dependent schedules. Although
adaptive schedules and meta-optimization are possible, they add computation,
require a criterion for balancing return and smoothness during training, and may
not guarantee the desired final regularization strength. We begin this exploration
with a simple linear ramp-up that retains direct control
over the final regularization strength. By gradually increasing the penalty coefficient,
we allow the policy to discover high-return behavior before smoothing its
actions. For environment interaction count $k$,
training budget $K$, and final coefficient $\lambda$, the
\methodname{} actor objective is
\begin{equation}
\mathcal L_{\pi}
=
\mathcal L_{\mathrm{RL}}
+
\frac{k}{K}\lambda L_{\mathrm{T}},
\qquad 0 \le k \le K,
\label{eq:actor_loss_with_linear_penalty}
\end{equation}
where $\mathcal L_{\mathrm{RL}}$ is the base algorithm's actor loss.

\input{figures/observation_noise_hopper_table.tex}
\input{figures/observation_noise_walker_table.tex}

\begin{table*}[t]
\centering
\vspace*{4pt}
\caption{Observation-noise results.}
\label{tab:observation_noise_performance}
\scriptsize
\setlength{\tabcolsep}{1.5pt}
\renewcommand{\arraystretch}{1.1}
\setlength{\aboverulesep}{0.25ex}
\setlength{\belowrulesep}{0.25ex}
\begin{tabular*}{\textwidth}{@{\extracolsep{\fill}}>{\centering\arraybackslash}p{38pt} l *{6}{l >{\columncolor{gray!10}}l}}
\toprule
\multirow{2}{*}{Task} & \multirow{2}{*}{Method}
& \multicolumn{2}{c}{$\sigma=0$} & \multicolumn{2}{c}{$0.10$}
& \multicolumn{2}{c}{$0.20$} & \multicolumn{2}{c}{$0.30$}
& \multicolumn{2}{c}{$0.40$} & \multicolumn{2}{c}{$0.50$} \\
\cmidrule(lr){3-4} \cmidrule(lr){5-6} \cmidrule(lr){7-8}
\cmidrule(lr){9-10} \cmidrule(lr){11-12} \cmidrule(lr){13-14}
& & \multicolumn{1}{c}{$R \uparrow$} & \multicolumn{1}{c}{\cellcolor{gray!10}$sm \downarrow$}
& \multicolumn{1}{c}{$R \uparrow$} & \multicolumn{1}{c}{\cellcolor{gray!10}$sm \downarrow$}
& \multicolumn{1}{c}{$R \uparrow$} & \multicolumn{1}{c}{\cellcolor{gray!10}$sm \downarrow$}
& \multicolumn{1}{c}{$R \uparrow$} & \multicolumn{1}{c}{\cellcolor{gray!10}$sm \downarrow$}
& \multicolumn{1}{c}{$R \uparrow$} & \multicolumn{1}{c}{\cellcolor{gray!10}$sm \downarrow$}
& \multicolumn{1}{c}{$R \uparrow$} & \multicolumn{1}{c}{\cellcolor{gray!10}$sm \downarrow$} \\
\midrule
\ObservationNoiseHopperRows
\midrule[1pt]
\ObservationNoiseWalkerRows
\bottomrule
\end{tabular*}
\vspace{-10pt}
\end{table*}

\section{Experiments}

Our experiments test the claims made in Section I.
We evaluate \methodname{}'s ability to reduce
oscillation while preserving return under clean and noisy observations,
together with its practical cost (Section~\ref{subsec:main_results}).
We compare temporal and spatial regularization under strong smoothing
(Section~\ref{subsec:temporal_spatial_regularization}), validate the predicted
spatial effect of the temporal penalty
(Section~\ref{subsec:transition_conditioned_experiment}), and
test its extension to spatial smoothness
(Section~\ref{subsec:observation_sensitivity}). We examine the benefits of
ramp-up for return and both forms of smoothness
(Section~\ref{subsec:linear_ramp_up_ablation}), followed by sim-to-real
validation on a physical robot (Section~\ref{subsec:sim2real_evaluation}).

\subsection{Experimental Setup}
We evaluated \methodname{} on five continuous-control tasks from
Gymnasium~\cite{NEURIPS2025_d7ff1795} and MuJoCo~\cite{todorov2012mujoco},
including Ant-v5, Hopper-v5, LunarLander-v3 with continuous actions,
Pendulum-v1, and Walker2d-v5. We applied \methodname{} to
SAC~\cite{haarnoja2018soft} and PPO~\cite{schulman2017proximal} and compared it
with the unregularized algorithms, CAPS~\cite{mysore2021regularizing},
L2C2~\cite{kobayashi2022l2c2}, Grad-CAPS (GRAD)~\cite{lee2024gradient},
ASAP~\cite{kwak2026enhancing}, and LipsNet++~\cite{song2025lipsnetpp}.
For CAPS, L2C2, Grad-CAPS, and ASAP, we used the configurations collected by
ASAP. For LipsNet++ and CATS, we used the same tuning
budget and selected the smoothest setting whose mean return was at least Base's
95\% confidence lower bound, or the highest-return setting otherwise. For LipsNet++,
the search covered all eight combinations of its DMControl hyperparameter settings.
We used the tuned SAC and PPO hyperparameters from RL Baselines3
Zoo to evaluate each regularizer with a competitive
base algorithm. Unless otherwise noted, we report means and 95\% confidence
intervals across ten training runs, each evaluated on ten held-out seeds.
We report cumulative return for task performance
and the smoothness score for action smoothness, and red-shaded entries indicate
results worse than Base.

\subsection{Main Results}
\label{subsec:main_results}
We first trained and evaluated \methodname{}, Base, and existing regularizers under
clean observations to assess their return--smoothness trade-offs.
Table~\ref{tab:main_results} shows that \methodname{} reduced the
smoothness score by $63.2\%$ on average and by up to $91.3\%$ on PPO Ant relative
to Base.
Although prior work has observed that smoothness
regularization can improve return, CATS substantially reduced action oscillation
while preserving return. This advantage was particularly pronounced on
higher-dimensional tasks, where broad spatial regularization may be more
restrictive than regularization along sampled transition directions. On Ant,
Hopper, and Walker, CATS was the only evaluated regularizer that did not reduce
mean return relative to Base. Notably, on
SAC Ant, the other evaluated regularizers either reduced return or provided
little improvement in smoothness, whereas CATS alone substantially improved
smoothness without reducing return.
The importance of temporal regularization and linear ramp-up in achieving this
result is examined in Sections~\ref{subsec:temporal_spatial_regularization_exp}
and~\ref{subsec:linear_ramp_up_ablation}, respectively.

We next evaluated the policies from Table~\ref{tab:main_results} under
observation noise on Hopper and Walker to test whether \methodname{} preserves
these gains without spatial regularization.
Table~\ref{tab:observation_noise_performance} reports this evaluation. We added independent
feature-wise uniform noise within $\pm\sigma$ times Base's clean-observation
standard deviation to unnormalized observations so that the perturbation
reflects each feature's scale.
Competing methods often sacrificed return for smoothness under clean
observations, or fell below Base in return as noise increased,
failing to preserve task performance under the conditions they are intended to
address. In contrast, \methodname{} remained smoother than Base while maintaining
higher return in every setting, supporting its practical utility
across diverse noise scales. Section~\ref{subsec:observation_sensitivity}
directly examines CATS's low policy sensitivity underlying this result.

To compare practical cost, we optimized each method's implementation within the
public ASAP codebase and measured end-to-end wall-clock
training time on PPO Walker over 1M
environment steps using an NVIDIA RTX 3090 GPU (24 GB).
Table~\ref{tab:ppo_walker_training_overhead} reports the mean
overhead relative to Base across four seeds, together with the
number of method-specific hyperparameters. At $19.8\%$, \methodname{} had the
lowest measured training overhead among the evaluated regularizers and required
only one such hyperparameter, supporting straightforward integration into
existing RL algorithms.

\begin{table}[t]
\centering
\caption{Training overhead.}
\label{tab:ppo_walker_training_overhead}
\scriptsize
\setlength{\tabcolsep}{2.6pt}
\renewcommand{\arraystretch}{1.1}
\setlength{\aboverulesep}{0.25ex}
\setlength{\belowrulesep}{0.25ex}
\begin{tabular*}{\columnwidth}{@{\extracolsep{\fill}}lcccccc@{}}
\toprule
& LipsNet++ & L2C2 & ASAP & GRAD & CAPS & \methodname{} \\
\midrule
Overhead (\%) & 83.5 & 77.1 & 40.5 & 30.2 & 28.6 & \textbf{19.8} \\
Hyperparameters & 3 & 4 & 3 & 1 & 3 & \textbf{1} \\
\bottomrule
\end{tabular*}
\vspace{-10pt}
\end{table}

\begin{figure}[t]
    \centering
    \includegraphics[width=\columnwidth]{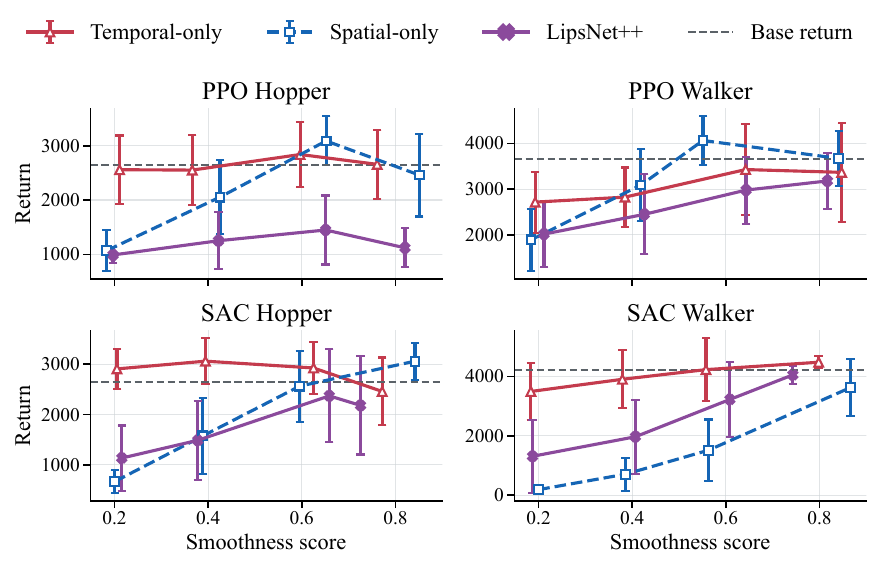}
    \vspace{-10pt}
    \caption{Return at matched smoothness levels on Hopper and Walker. For
    each target score in $\{0.2,0.4,0.6,0.8\}$, we compared configurations whose
    smoothness scores (lower is better) are within 10\% of the target. Error
    bars indicate 95\% confidence intervals.}
    \label{fig:regularization_return_smoothness_profiles}
\vspace{-10pt}
\end{figure}

\subsection{Temporal and Spatial Regularization}
\label{subsec:temporal_spatial_regularization_exp}
\label{subsec:temporal_spatial_regularization}
To test the basis for selecting temporal regularization in CATS, we compared
temporal-only CAPS, spatial-only CAPS, and LipsNet++ using the same training and
evaluation settings as Table~\ref{tab:main_results}. We obtained four smoothness
levels by increasing each method's penalty coefficient; for LipsNet++, we varied
only the Jacobian-penalty coefficient $\lambda_k$. In
Fig.~\ref{fig:regularization_return_smoothness_profiles}, the methods achieved
similar returns under moderate smoothing, but temporal-only CAPS preserved
substantially more return as smoothing became stronger. This result supports
our choice of temporal regularization, as constraining action differences along
sampled transition directions enables stronger smoothing with less performance
degradation.

\begin{table}[t]
\centering
\caption{Empirical $\widehat L_{\mathrm S}$ for neighborhood sizes
$k\in\{4,32,128\}$.}
\label{tab:transition_conditioned_validation}
\scriptsize
\setlength{\tabcolsep}{2.5pt}
\renewcommand{\arraystretch}{1.1}
\setlength{\aboverulesep}{0.25ex}
\setlength{\belowrulesep}{0.25ex}
\begin{tabular}{l l ccc}
\toprule
Task & Method & $k=4$ & $k=32$ & $k=128$ \\
\midrule
\multirow{4}{*}{Hopper} & PPO & 0.384\,{\tiny (0.081)} & 0.403\,{\tiny (0.084)} & 0.421\,{\tiny (0.087)} \\
& \methodname{} & 0.006\,{\tiny (0.001)} & 0.009\,{\tiny (0.001)} & 0.012\,{\tiny (0.002)} \\
\cmidrule{2-5}
& SAC & 0.172\,{\tiny (0.022)} & 0.194\,{\tiny (0.023)} & 0.214\,{\tiny (0.024)} \\
& \methodname{} & 0.007\,{\tiny (0.002)} & 0.010\,{\tiny (0.003)} & 0.013\,{\tiny (0.003)} \\
\midrule[1pt]
\multirow{4}{*}{Walker} & PPO & 0.535\,{\tiny (0.069)} & 0.608\,{\tiny (0.077)} & 0.665\,{\tiny (0.084)} \\
& \methodname{} & 0.038\,{\tiny (0.013)} & 0.054\,{\tiny (0.018)} & 0.070\,{\tiny (0.022)} \\
\cmidrule{2-5}
& SAC & 0.253\,{\tiny (0.058)} & 0.303\,{\tiny (0.068)} & 0.347\,{\tiny (0.077)} \\
& \methodname{} & 0.028\,{\tiny (0.008)} & 0.039\,{\tiny (0.011)} & 0.049\,{\tiny (0.014)} \\
\bottomrule
\end{tabular}
\vspace{-10pt}
\end{table}

\subsection{Empirical Validation of the Spatial Effect}
\label{subsec:transition_conditioned_pairwise_smoothing}
\label{subsec:transition_conditioned_experiment}
To empirically examine whether temporal regularization
reduces the action differences bounded by
Proposition~\ref{prop:squared_euclidean_bound}, we collected 10,000 stochastic
episodes for each policy, yielding between 5.3 and 9.9 million transitions
across settings, and sampled 20,000 query transitions $(s,a,s')$. For each query, we retrieved the
$k\in\{4,32,128\}$ transitions $(\tilde s_i,\tilde a_i,\tilde s'_i)$ from other
episodes whose next states are nearest to $s'$ in normalized next-state space,
thereby approximating conditioning current states on a shared next state. We then
estimated $\widehat L_{\mathrm S}$ by averaging
$\|\mu(\tilde s_i)-\mu(\tilde s_j)\|_2^2$ over cross-episode pairs
within each neighborhood.

Table~\ref{tab:transition_conditioned_validation} shows that CATS achieved lower
$\widehat L_{\mathrm S}$ than Base for every neighborhood size and
algorithm--environment setting. The temporal penalty alone indeed reduced action differences
between current states paired by next-state proximity, without directly
constraining their actions against one another.

\subsection{Policy Sensitivity to Observation Noise}
\label{subsec:observation_sensitivity}
In
Section~\ref{subsec:dynamics_aware_temporal_penalty}, we hypothesized that nearby
states share possible next states, allowing temporal regularization to indirectly
reduce their action differences and thereby promote spatial smoothness.
To test this hypothesis, we measured half the mean squared difference between
clean and perturbed policy outputs over ten noise draws at each visited
observation, using the protocol in
Table~\ref{tab:observation_noise_performance}.
Figure~\ref{fig:policy_output_sensitivity} shows that CATS
achieved policy sensitivity comparable to several explicit
spatial regularization methods except on SAC LunarLander, despite using no
spatial penalty.
Notably, on Hopper, CATS and LipsNet++ achieved
comparable policy sensitivity, but Table~\ref{tab:main_results} shows that only
CATS preserved task performance.
These results show that spatial smoothness is not exclusive to spatial
regularization and suggest that performance degradation stems from
directly constraining action changes across multiple
state-space directions rather than from spatial smoothness itself.

\begin{figure*}[t]
    \centering
    \vspace*{3pt}
    \includegraphics[width=0.36\textwidth,trim=0bp 3.7bp 0bp 2.9bp,clip]{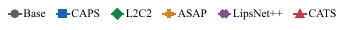}\par
    \nointerlineskip\vspace{1pt}
    \includegraphics[width=\textwidth]{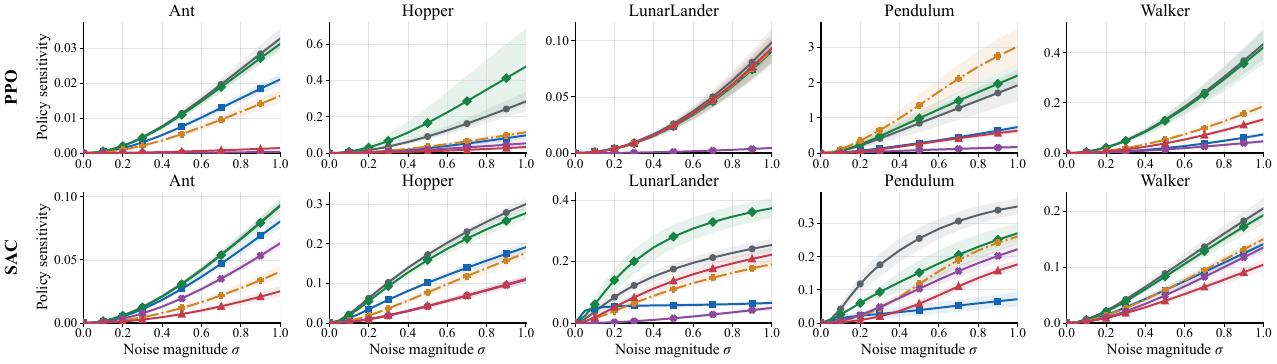}
    \vspace{-10pt}
    \caption{Policy sensitivity under observation noise for PPO (top)
    and SAC (bottom). Shaded regions indicate 95\% confidence intervals.}
    \label{fig:policy_output_sensitivity}
\vspace{-10pt}
\end{figure*}

\begin{figure}[t]
    \centering
    \includegraphics[width=\columnwidth,trim=0bp 5bp 0bp 0bp,clip]{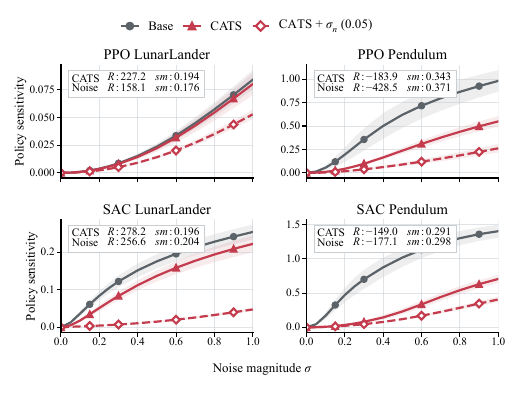}
    \vspace{-10pt}
    \caption{Next-state-noise extension on LunarLander and Pendulum. $R$
    denotes return and $sm$ the smoothness score. Shaded regions indicate
    95\% confidence intervals.}
    \label{fig:next_state_noise_extension}
\vspace{-10pt}
\end{figure}

This induced spatial smoothness, however, depends on overlap between next-state
distributions and may weaken when the overlap is limited.
To illustrate how this spatial effect can be extended,
we added Gaussian noise to the next state used by the temporal penalty on
LunarLander and Pendulum.
Figure~\ref{fig:next_state_noise_extension}
shows that broadening the next-state distribution indeed
led to lower policy sensitivity.
Unlike existing methods that combine temporal and
spatial penalties, this extension uses the spatial effect of temporal
regularization to adjust the range of spatial smoothness without additional
actor forward passes or a relative weighting between penalties. However,
consistent with our scope analysis, broadening this range also reduced task
performance, and mitigating this trade-off remains future work.

\subsection{Linear Ramp-Up Ablation}
\label{subsec:linear_ramp_up_ablation}

\begin{table}[t]
\centering
\vspace*{4pt}
\caption{Effect of ramp-up on return and smoothness.}
\label{tab:penalty_ramp_up_controls}
\scriptsize
\setlength{\tabcolsep}{2pt}
\renewcommand{\arraystretch}{1.1}
\setlength{\aboverulesep}{0.25ex}
\setlength{\belowrulesep}{0.25ex}
\begin{tabular*}{\columnwidth}{@{\extracolsep{\fill}}l l >{\columncolor{gray!10}}l l >{\columncolor{gray!10}}l l >{\columncolor{gray!10}}l}
\toprule
\multirow{2}{*}{Method} & \multicolumn{2}{c}{Ant} & \multicolumn{2}{c}{Hopper} & \multicolumn{2}{c}{Walker} \\
\cmidrule(lr){2-3} \cmidrule(lr){4-5} \cmidrule(lr){6-7}
& \multicolumn{1}{c}{$R \uparrow$} & \multicolumn{1}{c}{\cellcolor{gray!10}$sm \downarrow$} & \multicolumn{1}{c}{$R \uparrow$} & \multicolumn{1}{c}{\cellcolor{gray!10}$sm \downarrow$} & \multicolumn{1}{c}{$R \uparrow$} & \multicolumn{1}{c}{\cellcolor{gray!10}$sm \downarrow$} \\
\midrule
PPO & 1411\,{\tiny (506)} & \cellcolor{gray!10}1.464\,{\tiny (0.193)} & 2658\,{\tiny (685)} & \cellcolor{gray!10}1.162\,{\tiny (0.265)} & 3510\,{\tiny (582)} & \cellcolor{gray!10}1.682\,{\tiny (0.237)} \\
\midrule
Fixed-middle & \underline{3024}\,{\tiny (304)} & \cellcolor{gray!10}0.163\,{\tiny (0.012)} & 2784\,{\tiny (639)} & \cellcolor{gray!10}0.266\,{\tiny (0.039)} & \underline{3777}\,{\tiny (339)} & \cellcolor{gray!10}0.459\,{\tiny (0.030)} \\
Fixed-final & 2973\,{\tiny (142)} & \cellcolor{gray!10}\textbf{0.124}\,{\tiny (0.005)} & 2664\,{\tiny (687)} & \cellcolor{gray!10}\textbf{0.211}\,{\tiny (0.016)} & \cellcolor{red!18}3188\,{\tiny (1058)} & \cellcolor{gray!10}\textbf{0.342}\,{\tiny (0.038)} \\
\methodname{} & 2973\,{\tiny (140)} & \cellcolor{gray!10}\underline{0.127}\,{\tiny (0.006)} & \textbf{3002}\,{\tiny (488)} & \cellcolor{gray!10}\underline{0.222}\,{\tiny (0.033)} & \textbf{4071}\,{\tiny (614)} & \cellcolor{gray!10}\underline{0.407}\,{\tiny (0.032)} \\
\midrule[1pt]
SAC & 3656\,{\tiny (675)} & \cellcolor{gray!10}2.286\,{\tiny (0.213)} & 2656\,{\tiny (702)} & \cellcolor{gray!10}1.507\,{\tiny (0.273)} & 4189\,{\tiny (676)} & \cellcolor{gray!10}1.268\,{\tiny (0.126)} \\
\midrule
Fixed-middle & \cellcolor{red!18}3129\,{\tiny (556)} & \cellcolor{gray!10}\underline{0.532}\,{\tiny (0.109)} & 2807\,{\tiny (607)} & \cellcolor{gray!10}0.314\,{\tiny (0.015)} & \underline{4465}\,{\tiny (265)} & \cellcolor{gray!10}0.433\,{\tiny (0.047)} \\
Fixed-final & \cellcolor{red!18}2572\,{\tiny (479)} & \cellcolor{gray!10}\textbf{0.290}\,{\tiny (0.076)} & \underline{2893}\,{\tiny (439)} & \cellcolor{gray!10}\underline{0.262}\,{\tiny (0.031)} & 4362\,{\tiny (649)} & \cellcolor{gray!10}\textbf{0.398}\,{\tiny (0.065)} \\
\methodname{} & \textbf{4251}\,{\tiny (471)} & \cellcolor{gray!10}0.557\,{\tiny (0.049)} & \textbf{3046}\,{\tiny (514)} & \cellcolor{gray!10}\textbf{0.259}\,{\tiny (0.029)} & \textbf{4714}\,{\tiny (314)} & \cellcolor{gray!10}\underline{0.419}\,{\tiny (0.048)} \\
\bottomrule
\end{tabular*}
\vspace{-10pt}
\end{table}
\input{figures/ramp_sensitivity_table.tex}

Having established temporal regularization as the basis of CATS, we test whether
linear ramp-up improves return preservation and both temporal and spatial
smoothness. Table~\ref{tab:penalty_ramp_up_controls} compares CATS with two
fixed-coefficient controls on Ant, Hopper, and Walker under PPO and SAC.
\emph{Fixed-middle} used $\lambda/2$
throughout training, whereas
\emph{Fixed-final} used $\lambda$, matching CATS's final coefficient.
CATS achieved higher return than both fixed
controls in five of the six settings, while generally matching or improving
on the smoothness of Fixed-middle. On SAC Ant, CATS alone preserved Base
return, whereas both fixed controls fell below it. This suggests that relaxing
early regularization is important for allowing the policy to discover actions
that yield higher return.

Despite using a lower coefficient throughout most of training,
CATS achieved smoothness similar to Fixed-final in four of the six settings.
One possible explanation is that learning rewarding behavior before
smoothing it reduces conflict between reward optimization and smoothing,
compared with searching for rewarding actions under strong regularization
from the start.
Compared with Fixed-middle, which uses the same average penalty coefficient,
CATS achieved lower smoothness scores in five settings. This difference
may reflect the greater influence of later regularization on the final policy,
whereas subsequent policy updates may diminish the effects of early regularization.
Table~\ref{tab:ramp_sensitivity} further suggests that ramp-up improves not only
temporal smoothness but also the induced spatial effect, yielding lower policy
sensitivity.

\begin{table}[t]
\centering
\vspace*{4pt}
\caption{Sim-to-real results. Real-world success rates are mean $\pm$ standard
error, and all parenthetical values are 95\% confidence intervals.}
\label{tab:sim2real_results}
\label{tab:sim2real_lift_results}
\scriptsize
\setlength{\tabcolsep}{1.3pt}
\renewcommand{\arraystretch}{1.1}
\setlength{\aboverulesep}{0.25ex}
\setlength{\belowrulesep}{0.25ex}
\begin{tabular*}{\columnwidth}{@{\extracolsep{\fill}}l l ccc cc@{}}
\toprule
& Method & \multicolumn{3}{c}{Reach} & \multicolumn{2}{c}{Place Cube} \\
\cmidrule(lr){3-5}\cmidrule(lr){6-7}
& & S@5cm $\uparrow$ & S@10cm $\uparrow$ & $sm \downarrow$
& Success $\uparrow$ & $sm \downarrow$ \\
\midrule
\multirow{3}{*}{\textit{Sim}} & PPO & 88.5\,{\tiny (16.3)} & \textbf{100.0}\,{\tiny (0.0)} & 4.345\,{\tiny (2.897)} & \textbf{99.3}\,{\tiny (1.5)} & 1.700\,{\tiny (0.599)} \\
& CAPS & \underline{99.9}\,{\tiny (0.4)} & \textbf{100.0}\,{\tiny (0.0)} & \textbf{0.106}\,{\tiny (0.034)} & 98.9\,{\tiny (1.1)} & \underline{0.260}\,{\tiny (0.066)} \\
& \methodname{} & \textbf{100.0}\,{\tiny (0.0)} & \textbf{100.0}\,{\tiny (0.0)} & \textbf{0.106}\,{\tiny (0.025)} & \underline{99.1}\,{\tiny (2.3)} & \textbf{0.221}\,{\tiny (0.050)} \\
\midrule[1pt]
\multirow{3}{*}{\textit{Real}} & PPO & \underline{2.8}\,$\pm$\,{\tiny 2.7} & 8.3\,$\pm$\,{\tiny 4.6} & 6.104\,{\tiny (0.604)} & 40.0\,$\pm$\,{\tiny 15.5} & 0.977\,{\tiny (0.097)} \\
& CAPS & 0.0\,$\pm$\,{\tiny 0.0} & \underline{50.0}\,$\pm$\,{\tiny 8.3} & \textbf{0.081}\,{\tiny (0.018)} & \textbf{60.0}\,$\pm$\,{\tiny 15.5} & \underline{0.094}\,{\tiny (0.011)} \\
& \methodname{} & \textbf{16.7}\,$\pm$\,{\tiny 6.2} & \textbf{72.2}\,$\pm$\,{\tiny 7.5} & \underline{0.105}\,{\tiny (0.014)} & \underline{50.0}\,$\pm$\,{\tiny 15.8} & \textbf{0.088}\,{\tiny (0.009)} \\
\bottomrule
\end{tabular*}
\vspace{-10pt}
\end{table}

\subsection{Sim-to-Real Evaluation}
\label{subsec:sim2real_evaluation}
To assess whether the benefits of CATS extend to
robotic tasks and real-world deployment, we trained PPO policies for Reach and
Place Cube in Isaac Lab.
Reach requires moving the end effector to a target position in free space,
whereas Place Cube starts with the cube grasped and
requires placing it at a target position on the table.
Reach success required staying within 5 or 10 cm of the
target for one second, whereas Place Cube success required placing the cube
within 5 cm of the target.
Policies received target-conditioned images, target coordinates, and
proprioceptive observations following Azimi et al.~\cite{azimi2026benchmarking}.
We compared PPO, CAPS, and CATS to test whether the temporal-only design of
CATS transfers without the explicit spatial penalty used by CAPS. We used five
training seeds for Reach and three for Place Cube, with five evaluation seeds
per policy in both tasks.
We omitted reward penalties to clarify how each regularizer changes
PPO's task-driven behavior.
For both tasks, we selected each method's simulation policy with the highest
5-cm success rate and deployed it on a Franka Research 3 without domain
randomization. On the real robot, each
method was evaluated on 36 Reach targets and 10 Place Cube targets. For safe PPO deployment,
we applied position scaling and impedance control equally to all methods.

Table~\ref{tab:sim2real_results} shows that, in simulation, CAPS and CATS reduced
action oscillation relative to PPO on both tasks while preserving success rates.
Figure~\ref{fig:sim2real_action_trace} shows
a representative Reach example. On real Reach, we observed that PPO produced larger
oscillatory motions than in simulation, preventing the end effector from staying
near the target and degrading success and smoothness. Place Cube was less
affected because the arm only had to approach the target and release the cube. In
contrast, CAPS and CATS retained smoother action patterns after transfer,
limiting abrupt motions. CATS also preserved task performance relative to PPO,
achieving higher success at both Reach thresholds and on Place Cube while
recording the lowest Place Cube smoothness score. Although CAPS succeeded on
one more Place Cube trial than CATS, these results show that CATS remains effective in
real-world deployment despite its simple temporal-only design.

\begin{figure}[t]
    \centering
    \includegraphics[width=\columnwidth,clip]{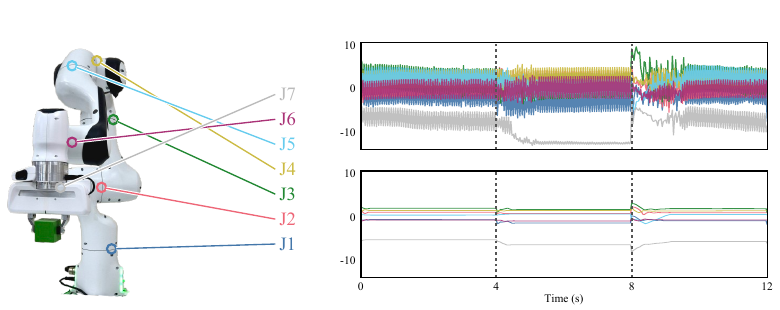}
    \vspace{-10pt}
    \caption{Franka robot with color-coded joints (left) and corresponding raw
    policy outputs during Reach (right).
    After each target change (dashed lines), CATS (bottom) responds with smaller action
    changes and quickly settles to stable outputs, whereas PPO (top) exhibits larger
    action changes followed by persistent oscillations.}
    \label{fig:sim2real_action_trace}
\vspace{-10pt}
\end{figure}

\section{Conclusion}
Our findings show that temporal regularization can
provide both temporal and spatial smoothness while better
preserving return under strong smoothing. More broadly, these findings
motivate designing smoothness regularization around transition directions
relevant to control, avoiding unnecessary constraints on action changes needed
for task performance. Linear ramp-up complements this direction by allowing the
policy to learn rewarding behavior before smoothing its actions, rather than
seeking rewarding behavior under smoothness constraints. \methodname{}
demonstrates the practical value of these findings through smooth
control in simulation and on a physical robot with little additional complexity.

\bibliographystyle{IEEEtran}
\bibliography{reference}

\end{document}

%% file: figures/observation_noise_hopper_table.tex
\newcommand{\ObservationNoiseHopperRows}{%
\multirow{12}{*}[-1.5ex]{Hopper} & PPO & 2658\,{\tiny (685)} & 1.162\,{\tiny (0.265)} & \underline{2681}\,{\tiny (599)} & 1.491\,{\tiny (0.252)} & \underline{2299}\,{\tiny (428)} & 1.730\,{\tiny (0.239)} & 1852\,{\tiny (321)} & 1.832\,{\tiny (0.233)} & 1522\,{\tiny (306)} & 1.878\,{\tiny (0.245)} & 1343\,{\tiny (251)} & 1.936\,{\tiny (0.224)} \\
\cmidrule{2-14}
& CAPS & \cellcolor{red!18}2440\,{\tiny (608)} & 0.471\,{\tiny (0.049)} & \cellcolor{red!18}2337\,{\tiny (538)} & 0.552\,{\tiny (0.056)} & \cellcolor{red!18}2205\,{\tiny (383)} & 0.693\,{\tiny (0.068)} & \underline{2040}\,{\tiny (327)} & 0.835\,{\tiny (0.098)} & \underline{1729}\,{\tiny (416)} & 0.930\,{\tiny (0.142)} & \underline{1500}\,{\tiny (386)} & 1.005\,{\tiny (0.157)} \\
\cmidrule{2-14}
& L2C2 & 2704\,{\tiny (545)} & \cellcolor{red!18}1.502\,{\tiny (0.435)} & \cellcolor{red!18}2425\,{\tiny (414)} & \cellcolor{red!18}1.632\,{\tiny (0.400)} & \cellcolor{red!18}2236\,{\tiny (292)} & \cellcolor{red!18}1.808\,{\tiny (0.270)} & 1987\,{\tiny (344)} & \cellcolor{red!18}1.942\,{\tiny (0.274)} & 1621\,{\tiny (304)} & \cellcolor{red!18}1.957\,{\tiny (0.228)} & 1404\,{\tiny (246)} & \cellcolor{red!18}1.981\,{\tiny (0.215)} \\
\cmidrule{2-14}
& ASAP & \underline{2780}\,{\tiny (557)} & \underline{0.441}\,{\tiny (0.035)} & \cellcolor{red!18}2597\,{\tiny (481)} & \underline{0.542}\,{\tiny (0.059)} & \cellcolor{red!18}2172\,{\tiny (251)} & \underline{0.675}\,{\tiny (0.061)} & 1959\,{\tiny (171)} & \underline{0.807}\,{\tiny (0.066)} & 1673\,{\tiny (198)} & \underline{0.894}\,{\tiny (0.084)} & 1343\,{\tiny (112)} & \underline{0.929}\,{\tiny (0.061)} \\
\cmidrule{2-14}
& LipsNet++ & \cellcolor{red!18}1878\,{\tiny (642)} & 0.543\,{\tiny (0.082)} & \cellcolor{red!18}1881\,{\tiny (609)} & 0.620\,{\tiny (0.071)} & \cellcolor{red!18}1670\,{\tiny (602)} & 0.708\,{\tiny (0.099)} & \cellcolor{red!18}1487\,{\tiny (528)} & 0.813\,{\tiny (0.117)} & \cellcolor{red!18}1347\,{\tiny (465)} & 0.916\,{\tiny (0.124)} & \cellcolor{red!18}1228\,{\tiny (395)} & 0.999\,{\tiny (0.131)} \\
\cmidrule{2-14}
& \methodname{} & \textbf{3002}\,{\tiny (488)} & \textbf{0.222}\,{\tiny (0.033)} & \textbf{2720}\,{\tiny (515)} & \textbf{0.287}\,{\tiny (0.016)} & \textbf{2428}\,{\tiny (499)} & \textbf{0.393}\,{\tiny (0.032)} & \textbf{2249}\,{\tiny (500)} & \textbf{0.493}\,{\tiny (0.053)} & \textbf{2006}\,{\tiny (446)} & \textbf{0.572}\,{\tiny (0.059)} & \textbf{1743}\,{\tiny (403)} & \textbf{0.627}\,{\tiny (0.070)} \\
\cmidrule[1pt]{2-14}
& SAC & 2656\,{\tiny (702)} & 1.507\,{\tiny (0.273)} & 2624\,{\tiny (484)} & 2.199\,{\tiny (0.237)} & 2176\,{\tiny (548)} & 2.479\,{\tiny (0.319)} & 1777\,{\tiny (353)} & 2.585\,{\tiny (0.233)} & 1372\,{\tiny (250)} & 2.549\,{\tiny (0.205)} & 1114\,{\tiny (161)} & 2.501\,{\tiny (0.157)} \\
\cmidrule{2-14}
& CAPS & 2856\,{\tiny (393)} & 1.235\,{\tiny (0.087)} & 2689\,{\tiny (398)} & 1.664\,{\tiny (0.142)} & \cellcolor{red!18}2063\,{\tiny (486)} & 1.838\,{\tiny (0.275)} & \cellcolor{red!18}1653\,{\tiny (439)} & 1.905\,{\tiny (0.311)} & \cellcolor{red!18}1345\,{\tiny (356)} & 1.933\,{\tiny (0.294)} & \cellcolor{red!18}1063\,{\tiny (270)} & 1.882\,{\tiny (0.276)} \\
\cmidrule{2-14}
& L2C2 & 2965\,{\tiny (614)} & 1.249\,{\tiny (0.105)} & 2659\,{\tiny (596)} & 1.666\,{\tiny (0.140)} & \cellcolor{red!18}2083\,{\tiny (413)} & 1.988\,{\tiny (0.202)} & \cellcolor{red!18}1679\,{\tiny (425)} & 2.101\,{\tiny (0.319)} & \cellcolor{red!18}1319\,{\tiny (278)} & 2.125\,{\tiny (0.296)} & 1233\,{\tiny (307)} & 2.239\,{\tiny (0.339)} \\
\cmidrule{2-14}
& ASAP & \underline{3014}\,{\tiny (516)} & \underline{0.655}\,{\tiny (0.084)} & \textbf{2919}\,{\tiny (490)} & 1.047\,{\tiny (0.096)} & \underline{2427}\,{\tiny (460)} & 1.401\,{\tiny (0.155)} & \underline{1923}\,{\tiny (355)} & 1.592\,{\tiny (0.172)} & \underline{1680}\,{\tiny (324)} & 1.764\,{\tiny (0.180)} & \underline{1427}\,{\tiny (241)} & 1.843\,{\tiny (0.162)} \\
\cmidrule{2-14}
& LipsNet++ & \cellcolor{red!18}2570\,{\tiny (800)} & 0.886\,{\tiny (0.159)} & \cellcolor{red!18}2534\,{\tiny (747)} & \underline{1.035}\,{\tiny (0.182)} & \cellcolor{red!18}2161\,{\tiny (599)} & \underline{1.177}\,{\tiny (0.188)} & 1804\,{\tiny (503)} & \underline{1.277}\,{\tiny (0.208)} & 1579\,{\tiny (421)} & \underline{1.380}\,{\tiny (0.230)} & 1384\,{\tiny (433)} & \underline{1.457}\,{\tiny (0.231)} \\
\cmidrule{2-14}
& \methodname{} & \textbf{3046}\,{\tiny (514)} & \textbf{0.259}\,{\tiny (0.029)} & \underline{2880}\,{\tiny (392)} & \textbf{0.585}\,{\tiny (0.036)} & \textbf{2811}\,{\tiny (276)} & \textbf{0.934}\,{\tiny (0.068)} & \textbf{2488}\,{\tiny (279)} & \textbf{1.164}\,{\tiny (0.078)} & \textbf{2150}\,{\tiny (312)} & \textbf{1.325}\,{\tiny (0.095)} & \textbf{1846}\,{\tiny (262)} & \textbf{1.429}\,{\tiny (0.081)} \\
}

%% file: figures/observation_noise_walker_table.tex
\newcommand{\ObservationNoiseWalkerRows}{%
\multirow{12}{*}[-1.5ex]{Walker} & PPO & 3510\,{\tiny (582)} & 1.682\,{\tiny (0.237)} & \underline{3706}\,{\tiny (372)} & 2.002\,{\tiny (0.165)} & 3301\,{\tiny (504)} & 2.227\,{\tiny (0.129)} & 2851\,{\tiny (344)} & 2.379\,{\tiny (0.182)} & 1940\,{\tiny (475)} & 2.293\,{\tiny (0.211)} & 1652\,{\tiny (342)} & 2.307\,{\tiny (0.150)} \\
\cmidrule{2-14}
& CAPS & 3626\,{\tiny (666)} & 0.505\,{\tiny (0.056)} & \cellcolor{red!18}3599\,{\tiny (697)} & \underline{0.564}\,{\tiny (0.062)} & \textbf{3831}\,{\tiny (351)} & \underline{0.706}\,{\tiny (0.044)} & \underline{3327}\,{\tiny (382)} & \textbf{0.809}\,{\tiny (0.053)} & 2553\,{\tiny (490)} & \textbf{0.853}\,{\tiny (0.081)} & \underline{2398}\,{\tiny (570)} & \textbf{0.961}\,{\tiny (0.110)} \\
\cmidrule{2-14}
& L2C2 & \cellcolor{red!18}3342\,{\tiny (547)} & \cellcolor{red!18}1.744\,{\tiny (0.157)} & \cellcolor{red!18}3353\,{\tiny (446)} & 1.969\,{\tiny (0.149)} & \cellcolor{red!18}2841\,{\tiny (480)} & 2.095\,{\tiny (0.192)} & \cellcolor{red!18}2429\,{\tiny (363)} & 2.226\,{\tiny (0.155)} & 1954\,{\tiny (378)} & 2.252\,{\tiny (0.191)} & \cellcolor{red!18}1554\,{\tiny (247)} & 2.244\,{\tiny (0.175)} \\
\cmidrule{2-14}
& ASAP & \underline{3754}\,{\tiny (530)} & \underline{0.490}\,{\tiny (0.034)} & \cellcolor{red!18}3547\,{\tiny (575)} & 0.606\,{\tiny (0.040)} & 3418\,{\tiny (487)} & 0.812\,{\tiny (0.051)} & 2941\,{\tiny (555)} & 0.974\,{\tiny (0.081)} & \underline{2611}\,{\tiny (413)} & 1.122\,{\tiny (0.038)} & 2079\,{\tiny (407)} & 1.203\,{\tiny (0.095)} \\
\cmidrule{2-14}
& LipsNet++ & \cellcolor{red!18}2344\,{\tiny (556)} & 0.993\,{\tiny (0.204)} & \cellcolor{red!18}2567\,{\tiny (448)} & 1.094\,{\tiny (0.204)} & \cellcolor{red!18}2521\,{\tiny (586)} & 1.183\,{\tiny (0.238)} & \cellcolor{red!18}2425\,{\tiny (604)} & 1.300\,{\tiny (0.218)} & 2289\,{\tiny (550)} & 1.382\,{\tiny (0.205)} & 2109\,{\tiny (505)} & 1.455\,{\tiny (0.191)} \\
\cmidrule{2-14}
& \methodname{} & \textbf{4071}\,{\tiny (614)} & \textbf{0.407}\,{\tiny (0.032)} & \textbf{3939}\,{\tiny (654)} & \textbf{0.519}\,{\tiny (0.040)} & \underline{3588}\,{\tiny (642)} & \textbf{0.690}\,{\tiny (0.060)} & \textbf{3384}\,{\tiny (705)} & \underline{0.870}\,{\tiny (0.075)} & \textbf{2774}\,{\tiny (760)} & \underline{0.987}\,{\tiny (0.103)} & \textbf{2427}\,{\tiny (736)} & \underline{1.116}\,{\tiny (0.132)} \\
\cmidrule[1pt]{2-14}
& SAC & 4189\,{\tiny (676)} & 1.268\,{\tiny (0.126)} & 4072\,{\tiny (555)} & 1.581\,{\tiny (0.134)} & 4032\,{\tiny (610)} & 1.982\,{\tiny (0.186)} & 3726\,{\tiny (422)} & 2.282\,{\tiny (0.196)} & 3427\,{\tiny (777)} & 2.513\,{\tiny (0.375)} & 2902\,{\tiny (751)} & 2.591\,{\tiny (0.355)} \\
\cmidrule{2-14}
& CAPS & \underline{4651}\,{\tiny (357)} & 1.178\,{\tiny (0.101)} & \underline{4567}\,{\tiny (298)} & 1.439\,{\tiny (0.098)} & \underline{4468}\,{\tiny (479)} & 1.789\,{\tiny (0.077)} & \underline{4360}\,{\tiny (698)} & 2.090\,{\tiny (0.153)} & 3937\,{\tiny (852)} & 2.262\,{\tiny (0.202)} & 3082\,{\tiny (1070)} & 2.258\,{\tiny (0.298)} \\
\cmidrule{2-14}
& L2C2 & 4413\,{\tiny (405)} & 1.081\,{\tiny (0.191)} & 4460\,{\tiny (296)} & 1.372\,{\tiny (0.169)} & 4393\,{\tiny (390)} & 1.772\,{\tiny (0.126)} & 4285\,{\tiny (248)} & 2.130\,{\tiny (0.067)} & \textbf{4187}\,{\tiny (302)} & 2.434\,{\tiny (0.077)} & 3492\,{\tiny (630)} & 2.511\,{\tiny (0.193)} \\
\cmidrule{2-14}
& ASAP & 4466\,{\tiny (400)} & \underline{0.795}\,{\tiny (0.056)} & 4409\,{\tiny (374)} & \underline{1.084}\,{\tiny (0.053)} & 4398\,{\tiny (346)} & \underline{1.481}\,{\tiny (0.046)} & 4315\,{\tiny (439)} & \underline{1.833}\,{\tiny (0.082)} & 3933\,{\tiny (481)} & \underline{2.081}\,{\tiny (0.100)} & 2987\,{\tiny (852)} & \underline{2.091}\,{\tiny (0.210)} \\
\cmidrule{2-14}
& LipsNet++ & 4419\,{\tiny (197)} & 1.093\,{\tiny (0.118)} & 4441\,{\tiny (230)} & 1.271\,{\tiny (0.125)} & 4430\,{\tiny (183)} & 1.581\,{\tiny (0.123)} & 4208\,{\tiny (362)} & 1.870\,{\tiny (0.135)} & 3837\,{\tiny (695)} & 2.109\,{\tiny (0.195)} & \textbf{3692}\,{\tiny (677)} & 2.328\,{\tiny (0.234)} \\
\cmidrule{2-14}
& \methodname{} & \textbf{4714}\,{\tiny (314)} & \textbf{0.419}\,{\tiny (0.048)} & \textbf{4691}\,{\tiny (321)} & \textbf{0.708}\,{\tiny (0.034)} & \textbf{4673}\,{\tiny (323)} & \textbf{1.078}\,{\tiny (0.044)} & \textbf{4531}\,{\tiny (404)} & \textbf{1.392}\,{\tiny (0.055)} & \underline{4184}\,{\tiny (522)} & \textbf{1.630}\,{\tiny (0.098)} & \underline{3583}\,{\tiny (934)} & \textbf{1.710}\,{\tiny (0.187)} \\
}

%% file: figures/ramp_sensitivity_table.tex
\begin{table}[t]
\centering
\caption{Effect of ramp-up on policy sensitivity (PPO).}
\label{tab:ramp_sensitivity}
\scriptsize
\setlength{\tabcolsep}{1.5pt}
\renewcommand{\arraystretch}{1.1}
\setlength{\aboverulesep}{0.25ex}
\setlength{\belowrulesep}{0.25ex}
\resizebox{\columnwidth}{!}{%
\begin{tabular}{@{}llcccccc@{}}
\toprule
& & \multicolumn{6}{c}{Policy sensitivity ($\times 10^{-4}$) $\downarrow$} \\
\cmidrule(lr){3-8}
Task & Method & $\sigma=0.05$ & $0.10$ & $0.15$ & $0.20$ & $0.25$ & $0.30$ \\
\midrule
\multirow{2}{*}{Ant} & Fixed-middle & 0.06\,{\tiny (0.01)} & 0.25\,{\tiny (0.05)} & 0.56\,{\tiny (0.11)} & 0.99\,{\tiny (0.20)} & 1.54\,{\tiny (0.30)} & 2.20\,{\tiny (0.43)} \\
 & CATS & \textbf{0.05}\,{\tiny (0.01)} & \textbf{0.19}\,{\tiny (0.03)} & \textbf{0.42}\,{\tiny (0.06)} & \textbf{0.75}\,{\tiny (0.11)} & \textbf{1.16}\,{\tiny (0.17)} & \textbf{1.65}\,{\tiny (0.24)} \\
\midrule
\multirow{2}{*}{Hopper} & Fixed-middle & 1.29\,{\tiny (0.20)} & 5.15\,{\tiny (0.81)} & 11.55\,{\tiny (1.81)} & 20.41\,{\tiny (3.16)} & 31.64\,{\tiny (4.85)} & 45.17\,{\tiny (6.84)} \\
 & CATS & \textbf{1.04}\,{\tiny (0.11)} & \textbf{4.16}\,{\tiny (0.45)} & \textbf{9.32}\,{\tiny (1.01)} & \textbf{16.47}\,{\tiny (1.82)} & \textbf{25.56}\,{\tiny (2.88)} & \textbf{36.49}\,{\tiny (4.20)} \\
\midrule
\multirow{2}{*}{Walker} & Fixed-middle & 4.23\,{\tiny (0.43)} & 16.88\,{\tiny (1.69)} & 37.86\,{\tiny (3.78)} & 67.04\,{\tiny (6.66)} & 104.22\,{\tiny (10.28)} & 149.17\,{\tiny (14.59)} \\
 & CATS & \textbf{4.05}\,{\tiny (0.46)} & \textbf{16.18}\,{\tiny (1.82)} & \textbf{36.31}\,{\tiny (4.06)} & \textbf{64.29}\,{\tiny (7.13)} & \textbf{99.94}\,{\tiny (10.98)} & \textbf{143.02}\,{\tiny (15.52)} \\
\bottomrule
\end{tabular}%
}
\vspace{-10pt}
\end{table}